\newtheorem{theorem}{Theorem}
\newtheorem{lemma}{Lemma}
\newtheorem{definition}{Definition}
\title{A Relative Ignorability Framework for Decision-Relevant Observability in Control Theory and Reinforcement Learning}
\author{MaryLena Bleile$^{**}$, Minh-Nhat Phung$^\dagger$ and Minh-Binh Tran$^\dagger$\footnote{M.-N. P and M.-B. T are funded in part by the NSF Grants DMS-2204795, DMS-2305523, Humboldt
Fellowship, NSF CAREER DMS-2303146, DMS-2306379}\\\\\\
$^{**}$Sanofi Pharmaceuticals, New York, NY 10018, USA\\
Email: MaryLena.Bleile@sanofi.com \\
Permanent Email: marylenableile@gmail.com
\\\\
$^\dagger$Department of Mathematics, Texas A\&M University,\\ College Station, TX 77843, USA\\
Emails: pmnt1114@tamu.edu \& minhbinh@tamu.edu\\
}
\date{\today}
\begin{document}

\maketitle

\begin{abstract}
Sequential decision-making systems routinely operate with missing or incomplete data. Classical reinforcement learning theory, which is commonly used to solve sequential decision problems, assumes Markovian observability, which may not hold under partial observability. Causal inference paradigms formalise ignorability of missingness. We show these views can be unified and generalized in order to guarantee Q-learning convergence even when the Markov property fails. To do so, we introduce the concept of \emph{relative ignorability}. Relative ignorability is a graphical-causal criterion which refines the requirements for accurate decision-making based on incomplete data. Theoretical results and simulations both reveal that non-markovian stochastic processes whose missingness is relatively ignorable with respect to causal estimands can still be optimized using standard Reinforcement Learning algorithms. These results expand the theoretical foundations of safe, data-efficient AI to real-world environments where complete information is unattainable.
\end{abstract}

\section{Introduction}

Reinforcement learning theory traditionally assumes that agents have complete access to state information \citep{sutton2018, bellman1957}. However, real-world applications usually involve missing or unobserved state components, even if these are not explicitly acknowledged or modelled. Consider, for example, the clinical AI agent developed by Komorowski, et al \cite{komorowski2018artificial}, which was shown to provide treatment suggestions for sepsis care. The AI clinician makes decisions based on 48 clinical features, aiming to minimize overall and 90-day hospital mortality rates. The input feature set, which included demographic data, vital signs, lab values, and medication history, notably does not include medical insurance status, which has been shown to influence sepsis outcome even after controlling for the hospital treatments received \cite{kumar2014association,rhee2019prevalence}. In the absence of complete information, therefore, the standard convergence theorems do not apply. 

\begin{figure}[hb]
\centering
\includegraphics[width=350pt, height=180pt]{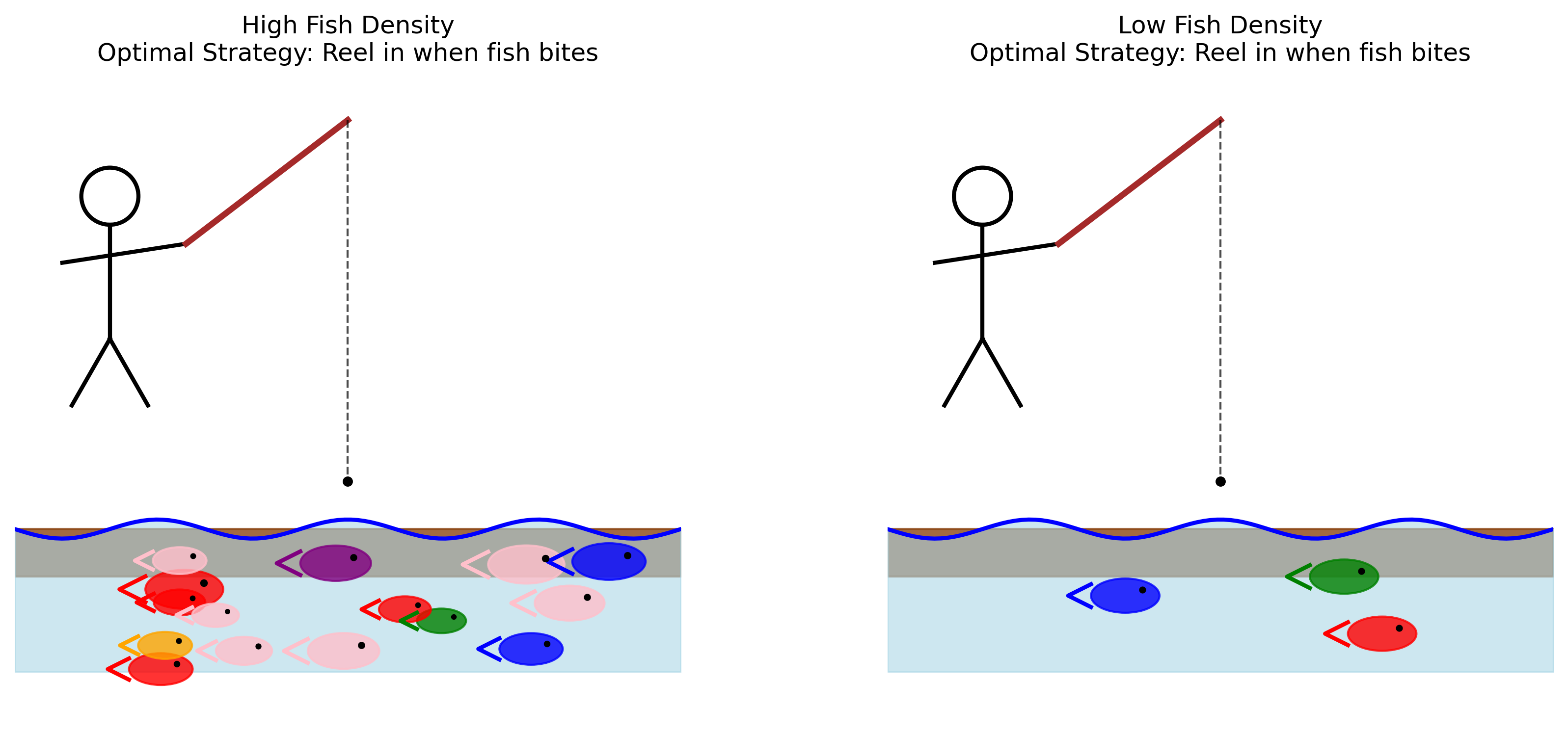}
\label{fig:fish}
\caption{Fish abundance and type in the pond may affect the probability of reward. However, the optimal fishing strategy is the same regardless of fish abundance. Hence, fish abundance is relatively ignorable with respect to fishing strategy.}
\end{figure}

While Partially Observable Markov Decision Processes (POMDPs) address the challenge of partial observability \citep{puterman1994}, they require computationally expensive belief state maintenance, and convergence proofs typically require that the latency structure of the POMDP model is accurate.  However, explicit modelling and estimation of all latent variables may not be necessary: Missing state components may not affect optimal decision-making, even when they violate the Markov property. Consider, for example, the fishing scenario described by Krause and H{\"u}botter \cite{krause2025probabilistic}, where we wish to reel in the line only if there is a fish on the hook, and not otherwise. In this situation, we are rewarded for each successful catch of fish, with no reward acquired for reeling in an empty line. This problem does not explicitly model the composition of fish types in the pond, which may affect the overall rate of fish catching, since some fish may bite more frequently than others. However, the optimal strategy for line-reeling is to reel in the line every time there is a fish on it, regardless of type; hence, the fish composition variable is not important for decision-making, and does not need to be included in the POMDP model. Figure \ref{fig:fish} demonstrates this idea; regardless of how many fish are in the pond, one should still pull the reel in if a fish bites, and leave the hook in the water otherwise.

A similar phenomenon is present in the sepsis treatment model; here, an individual's insurance status does not affect the ranking of the potential treatments one could apply. Suppose, for example, that drug A is better than drug B for an insured individual. One might reasonably assume that if that same individual were uninsured, then drug A would still work better than drug B, even though the individual's probability of mortality would increase regardless of whether the individual received drug A or drug B; we say that insurance status is \emph{relatively ignorable} with respect to the treatment action. With this in mind, it is unsurprising that the AI clinician which ignored patients' insurance status was highly effective despite the fact that standard Reinforcement Learning convergence proofs do not apply: Indeed, the AI clinician outperformed human clinicians by a substantial margin. 

The current paper provides the mathematical scaffolding which justifies the use of Q-learning in \citep{komorowski2018artificial}: Our main contribution is a  novel concept of \emph{relative ignorability}; this definition is foundational on established concepts developed in statistical literature on causal inference and missing data. Based on the novel definition, we provide a novel proof that Q-learning \citep{watkins1989} converges under a marginal Bellman operator when missing components satisfy this condition. Finally, we discuss the potential utility of the relative ignorability concept beyond classic Q-learning, showing potential extensions to Deep Q-learning, as well as applications in clinical dosing and fault tolerance in distributed systems.

\section{Background}
Our novel concept of relative ignorability draws from missing data theory in causal inference \citep{diggle1994, mohan2013}. In causal inference, marginal structural models \citep{robins2000} handle time-varying confounding through reweighting schemas that share mathematical structure with POMDP estimation.

In classic statistical literature, there are three types of missing data: Missingness can be i) \emph{missing completely at random}, where missingness is not related to outcome, ii) \emph{missing at random}, where missingness and outcome are unrelated after controlling for the observed variables, or iii) \emph{non-ignorable}, where missingness is associated with outcome even after controlling for observed covariates. 

A seminal paper by Diggle and Kenward \cite{diggle1994} identified non-ignorable missingness in three real-world datasets. One of these datasets was a trial which aimed to compare different diets for dairy cows: The outcome vector consisted of milk protein concentrations at a series of timepoints in the study. Some cows had to drop out of the study because they stopped producing milk, inducing a degree of missingness to the data. This was shown to be \emph{informative dropout}, which means that the resulting missingness in the data was non-ignorable. However, the informative dropout model yielded the same actionable insight as the model which ignored the informative dropout. 

Actionable insights can be stable despite non-ignorable missingness if the missing variables affect treatment groups equally. Our novel concept of \emph{relative ignorability} allows one to differentiate the stable action-insight case from other situations where inclusion of the missing data would lead us to a different actionable insight. We propose that it is only strictly necessary to model the missing information under the second case where missing variables are relatively ignorable, proving that sequential decision-making (Q-learning) based on incomplete information with relatively ignorable missingness can still yield an optimal policy. This theoretical result explains the efficacy of many practical applications such as the AI clinician discussed previously. 

\section{Preliminaries}

Let $(\Omega,\mathcal{F},\mathbb{P})$ be a complete probability space.  Over the probability space, we consider a stationary Markov Decision Process $(\mathcal{X}, \mathcal{A}, \Pi, \mu, \Gamma, \rho, \gamma)$ where:
\begin{itemize}
	\item The countable set $\mathcal{X}\subset\mathbb{R}^{d}$ is the \emph{state space}, and we denote the set of subsets of $\mathcal{X}$ by $\mathcal{B}(\mathcal{X})$.
	\item The \emph{whole action space} is a countable set $\mathcal{A}\subset\mathbb{R}^{d'}$.
We denote by $\mathcal{B}(\mathcal{A})$ the set of subsets of $\mathcal{A}$ and $\mathcal{P}(\mathcal{A})$ the set of probability measures on $\mathcal{A}$.

	\item We consider the collection of random variables $X_j:\Omega\to\mathcal{X}$, which represent the state, and $A_j:\Omega\to\mathcal{A}$, which represent the action.
	\item For a set $\mathcal{Z}$ ($\mathcal{X}$ or $\mathcal{A}$) and random variables $Z_k:\Omega\to \mathcal{Z}$ ($X_j$ or $A_j$), we recall the notation of conditional probability
		\begin{align*}
			\mathbb{P}(Z_k\in\mathcal{Z}_k\mid Z_{k-1}=z_{k-1},\dots,Z_{0}=z_0)=\frac{\mathbb{P}(Z_k^{-1}(\mathcal{Z}_k)\cap Z_{k-1}^{-1}(z_{k-1})\cap\dots\cap Z_0^{-1}(z_0))}{\mathbb{P}(Z_{k-1}^{-1}(z_{k-1})\cap\dots\cap Z_0^{-1}(z_0))},
		\end{align*}
		if
		\[\mathbb{P}(Z_{k-1}^{-1}(z_{k-1})\cap\dots\cap Z_0^{-1}(z_0))>0.\]
	\item The class of \emph{decision policies} $\Pi$ is a set of functions $\pi:\mathcal{X}\to\mathcal{P}(\mathcal{A})$. The policy satisfies that the support of $\pi(\cdot\mid x)$ is a set $\mathcal{A}_{x}\subset\mathcal{A}$.

		The set $\mathcal{A}_{x}$ is the space of \emph{allowable actions} for state $x$, meaning
\[\mathbb{P}(A_j\in\widetilde{\mathcal{A}}\mid X_j=x )>0,\quad\forall j\ge0,x\in\mathcal{X},\widetilde{\mathcal{A}}\in\mathcal{B}(\mathcal{A}), \widetilde{\mathcal{A}}\cap\mathcal{A}_{x}\ne\emptyset.\]
Since $\mathcal{X},\mathcal{A}$ are countable, we have the following countable set
\[\mathcal{Y}=\bigcup_{x\in\mathcal{X}}\{x\}\times \mathcal{A}_{x}\subset\mathcal{X}\times\mathcal{A}.\]

The policy $\pi$ represents the probability to take an action in $\mathcal{A}$ given state $x$, in the sense that
		\[\mathbb{P}(A_j\in\widetilde{\mathcal{A}}\mid X_j=x_j)=\int_{\widetilde{\mathcal{A}}}\pi(da_j\mid x_j)=\pi(\widetilde{\mathcal{A}}\mid x_j)\]
		for all $\widetilde{\mathcal{A}}\in\mathcal{B}(\mathcal{A}),x_j\in\mathcal{X}$.

	\item The distribution $\mu:\mathcal{B}\to[0,1]$ is the \emph{initial state distribution}, that is for $\widetilde{\mathcal{X}}\in\mathcal{B}(\mathcal{X})$
		\[\mu(\widetilde{\mathcal{X}})=\int_{\widetilde{\mathcal{X}}}\mu(dx_0)=\mathbb{P}(X_{0}\in\widetilde{\mathcal{X}}).\]
		We will consider $\mu(\{x\})>0$ for all $x\in\mathcal{X}$.
	\item The function $\Gamma: \mathcal{B}(\mathcal{X})\times\mathcal{Y} \to [0,1]$ is the \emph{transition kernel}, that satisfies
		\[\Gamma(\widetilde{\mathcal{X}}\mid x_j,a_j)=\int_{\widetilde{\mathcal{X}}}\Gamma(dx_{j+1}\mid x_j,a_j)=\mathbb{P}(X_{j+1}\in\widetilde{\mathcal{X}}\mid X_{j}= x_j,A_j=a_j)\]
		for all $\widetilde{\mathcal{X}}\in\mathcal{B}(\mathcal{X}), (x_j,a_j)\in\mathcal{Y}$.
		Similar to the initial state distribution,  we consider for all $j\ge0,x_{j+1}\in\mathcal{X}$, there exists $(x_j,a_j)\in\mathcal{Y}$ such that $\Gamma(\{x_{j+1}\}\mid x_j,a_j)>0$.
	\item The Markov property for the decision process includes:
		\[\mathbb{P}(A_{j}\in\widetilde{\mathcal{A}}\mid X_{j}=x_j,A_{j-1}=a_{j-1},\dots,X_{0}=x_0)=\mathbb{P}(A_j\in\widetilde{\mathcal{A}}\mid X_j=x_j),\]
		and
		\[\mathbb{P}(X_{j+1}\in\widetilde{\mathcal{X}}\mid X_{j}=x_j,A_j=a_j,\dots,X_0=x_0,A_0=a_0)=\mathbb{P}(X_{j+1}\in\widetilde{\mathcal{X}}\mid X_{j}=x_j,A_j=a_j)\]
		for all $j\ge0$, where $\widetilde{\mathcal{A}}\in\mathcal{B}(\mathcal{A})$, $\widetilde{\mathcal{X}}\in\mathcal{B}(\mathcal{X})$, and $(x_k,a_k)\in\mathcal{Y}$ for $k=0,\dots,j$.
	\item The bounded function $\rho: \mathcal{Y}\to \mathbb{R}$ is the \emph{reward function}.
	\item The constant $\gamma\in(0,1)$ is the \emph{discount factor}.
\end{itemize}

{\bf Example:} We consider the simple system of 2 states $\mathcal{X}=\{0,1\}$ and the action space $\mathcal{A}=\{0,1,2\}$.
  
We consider the allowable action spaces $\mathcal{A}_0=\{0,1\}$ and $\mathcal{A}_1=\{2\}$. Thus, $\mathcal{Y}=\{(0,0),(0,1),(1,2)\}$. The initial is given by $\mu(\{0\})=0.5$, and $\mu(\{1\})=0.5$. We want to transition from state $0$ to state $1$. Let the transition kernel be given by
\[\Gamma(\{0\}\mid 0,0)=0.6,\quad\Gamma(\{1\}\mid 0,0)=0.4,\quad\Gamma(\{0\}\mid 0,1)=0.1,\]\[\quad\Gamma(\{1\}\mid 0,1)=0.9,\quad\Gamma(\{1\}\mid (1,2))=1.\]
We see that once we reach state $1$, we stop.
We consider a reward as follows:
\[\rho(0,0)=1,\quad\rho(0,1)=2,\quad\rho(1,2)=0.\]
This means we give reward for any allowable action when in state $0$ but give none when we already at state $1$.

We can try to enforce the policy $\pi$, which is given by
\[\pi(\{0\}\mid 0)=0.2,\quad\pi(\{1\}\mid 0)=0.8,\quad\pi(\{2\}\mid 1)=1.\]
Or, we consider $\pi$, which is given by
\[\pi(\{0\}\mid 0)=0.8,\quad\pi(\{1\}\mid 0)=0.2,\quad\pi(\{2\}\mid 1)=1.\]
The class of policies $\Pi$ consists of these two policies, and we want to know which is better given $\rho$ and $\gamma$.

By \cite[Proposition 7.28]{bertsekas1978stochastic}, there exists unique probability measure $\mu_j^\pi$ on $(\mathcal{X}\times\mathcal{A})^{j}\times\mathcal{X}$, which is supported in $\mathcal{Y}^{j}\times\mathcal{X}$, such that for all $\widetilde{\mathcal{X}}_j\in\mathcal{B}(\mathcal{X})$ and $\widetilde{\mathcal{A}}_j\in\mathcal{B}(\mathcal{A})$, we have
\begin{align*}
	\mu_j^\pi(\widetilde{\mathcal{X}}_0\times\widetilde{\mathcal{A}}_0\times\dots\times\widetilde{\mathcal{A}}_{j-1}\times\widetilde{\mathcal{X}}_j)=\int_{\widetilde{\mathcal{X}}_0}\int_{\widetilde{\mathcal{A}}_0}\dots\int_{\widetilde{\mathcal{X}}_j}\Gamma(dx_j\mid x_{j-1},a_{j-1})\pi(da_{j-1}\mid x_{j-1})\dots\pi(da_0\mid x_0)\mu(dx_0).
\end{align*}
There also exists unique probability measure $\bar{\mu}_j^\pi$ on $(\mathcal{X}\times\mathcal{A})^{j+1}$, which is supported in $\mathcal{Y}^{j+1}$, such that
\begin{align*}
	\bar{\mu}_j^\pi(\widetilde{\mathcal{X}}_0\times\widetilde{\mathcal{A}}_0\times\dots\times\widetilde{\mathcal{X}}_j\times\widetilde{\mathcal{A}}_{j})=\int_{\widetilde{\mathcal{X}}_0}\int_{\widetilde{\mathcal{A}}_0}\dots\int_{\widetilde{\mathcal{A}}_j}\pi(da_{j}\mid x_{j})\Gamma(dx_{j}\mid x_{j-1},a_{j-1})\dots\pi(da_0\mid x_0)\mu(dx_0).
\end{align*}
According to \cite[Proposition 7.28]{bertsekas1978stochastic}, by Kolmogorov extension theorem, there exists a unique extension $\bar{\mu}_\infty^\pi$ of $\mu_j^\pi$ and $\bar{\mu}_j^\pi$ on $(\mathcal{X}\times\mathcal{A})^{\mathbb{N}}$, supported in $\mathcal{Y}^{\mathbb{N}}$. It is an extension in the sense that
	\begin{align*}
		\mu_j^\pi(\widetilde{\mathcal{X}}_0\times\widetilde{\mathcal{A}}_0\times\dots\times\widetilde{\mathcal{A}}_{j-1}\times\widetilde{\mathcal{X}}_j)=\int_{\widetilde{\mathcal{X}}_0\times\widetilde{\mathcal{A}}_0\times\dots\times\widetilde{\mathcal{X}}_j\times\mathcal{A}\times(\mathcal{X}\times\mathcal{A})^{\mathbb{N}\setminus\{0,1,\dots,j\}}}\bar{\mu}_\infty^\pi,\quad\forall j\ge0;
	\end{align*}
	and
\begin{align*}
	\bar{\mu}_{j+1}^\pi(\widetilde{\mathcal{X}}_0\times\widetilde{\mathcal{A}}_0\times\dots\times\widetilde{\mathcal{X}}_j\times\widetilde{\mathcal{A}}_{j})=\int_{\widetilde{\mathcal{X}}_0\times\widetilde{\mathcal{A}}_0\times\dots\times\widetilde{\mathcal{X}}_j\times\widetilde{\mathcal{A}}_{j}\times(\mathcal{X}\times\mathcal{A})^{\mathbb{N}\setminus\{0,1,\dots,j\}}}\bar{\mu}_\infty^\pi,\quad\forall j\ge0.
\end{align*}

For $\bar{\mu}^\pi_\infty$, we have the following lemma.

\begin{lemma}
	For $\widetilde{\mathcal{X}}_j\subset\mathcal{X},\widetilde{\mathcal{A}}_j\subset\mathcal{A}$ and $\widetilde{\mathcal{X}}_j\times\widetilde{\mathcal{A}}_j\cap\mathcal{Y}\ne\emptyset$, we have
	\[\bar{\mu}_{\infty}^\pi((\mathcal{X}\times\mathcal{A})^{j}\times\widetilde{\mathcal{X}}_j\times\widetilde{\mathcal{A}}_j\times(\mathcal{X}\times\mathcal{A})^{\mathbb{N}\setminus\{0,1,\dots,j\}})>0.\]
\end{lemma}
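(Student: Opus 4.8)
The plan is to push the statement down to a finite-dimensional marginal and then exhibit one concrete trajectory of positive mass lying inside the cylinder. First, by the consistency property that defines $\bar{\mu}_\infty^\pi$ as an extension of the finite marginals, taking $\widetilde{\mathcal{X}}_k=\mathcal{X}$ and $\widetilde{\mathcal{A}}_k=\mathcal{A}$ for $k=0,\dots,j-1$, the cylinder mass in question equals $\bar{\mu}_j^\pi\bigl((\mathcal{X}\times\mathcal{A})^{j}\times\widetilde{\mathcal{X}}_j\times\widetilde{\mathcal{A}}_j\bigr)$, so it suffices to bound this quantity below. Since $\mathcal{X}$ and $\mathcal{A}$ are countable and $\bar{\mu}_j^\pi$ is a nonnegative countably additive measure supported on $\mathcal{Y}^{j+1}$, it is enough to find a single point $(x_0^\ast,a_0^\ast,\dots,x_j^\ast,a_j^\ast)\in\mathcal{Y}^{j+1}$ with $(x_j^\ast,a_j^\ast)\in\widetilde{\mathcal{X}}_j\times\widetilde{\mathcal{A}}_j$ whose atom has positive mass; monotonicity of the measure then gives the bound for the larger product set. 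By the defining formula for $\bar{\mu}_j^\pi$, the mass of such an atom factorizes as
\[\mu(\{x_0^\ast\})\cdot\Bigl(\prod_{k=1}^{j}\Gamma(\{x_k^\ast\}\mid x_{k-1}^\ast,a_{k-1}^\ast)\Bigr)\cdot\Bigl(\prod_{k=0}^{j}\pi(\{a_k^\ast\}\mid x_k^\ast)\Bigr),\]
so the whole argument reduces to choosing the trajectory so that every factor is nonzero.

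I would construct the trajectory backward from coordinate $j$. By hypothesis $(\widetilde{\mathcal{X}}_j\times\widetilde{\mathcal{A}}_j)\cap\mathcal{Y}\ne\emptyset$, so fix $(x_j^\ast,a_j^\ast)$ in this intersection; then $a_j^\ast\in\mathcal{A}_{x_j^\ast}$, and since the support of $\pi(\cdot\mid x_j^\ast)$ is exactly $\mathcal{A}_{x_j^\ast}$ in a countable space, $\pi(\{a_j^\ast\}\mid x_j^\ast)>0$. Inductively, given $x_{k+1}^\ast\in\mathcal{X}$, the reachability assumption on the transition kernel furnishes a pair $(x_k^\ast,a_k^\ast)\in\mathcal{Y}$ with $\Gamma(\{x_{k+1}^\ast\}\mid x_k^\ast,a_k^\ast)>0$; running this from $k=j-1$ down to $k=0$ produces the full path. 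Because each $(x_k^\ast,a_k^\ast)\in\mathcal{Y}=\bigcup_{x}\{x\}\times\mathcal{A}_x$ we get $\pi(\{a_k^\ast\}\mid x_k^\ast)>0$ for every $k$, the standing assumption $\mu(\{x\})>0$ for all $x\in\mathcal{X}$ gives $\mu(\{x_0^\ast\})>0$, and the transition factors are positive by construction; hence the displayed product is strictly positive, which is exactly what was needed. When $j=0$ no backward steps are required: one simply picks $(x_0^\ast,a_0^\ast)\in(\widetilde{\mathcal{X}}_0\times\widetilde{\mathcal{A}}_0)\cap\mathcal{Y}$.

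The argument is mostly bookkeeping, and I expect the only delicate points to be organizational rather than substantive: aligning the indices in the consistency identity so that the cylinder collapses onto $\bar{\mu}_j^\pi$ and not a neighbouring marginal, and verifying that the backward selection never leaves $\mathcal{Y}$ — that is, that each chosen action is admissible for its state. The latter is precisely why the reachability hypothesis on $\Gamma$ is phrased in terms of pairs in $\mathcal{Y}$ rather than arbitrary state–action pairs, and why it combines cleanly with the description $\mathcal{Y}=\bigcup_{x\in\mathcal{X}}\{x\}\times\mathcal{A}_x$ and with $\pi(\cdot\mid x)$ having full support on $\mathcal{A}_x$.
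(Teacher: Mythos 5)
Your proof is correct and follows essentially the same route as the paper's: both reduce the cylinder mass to a finite-dimensional marginal and derive positivity from the full support of $\mu$, the positivity of $\pi(\cdot\mid x)$ on $\mathcal{A}_x$, and the reachability assumption on $\Gamma$. The paper phrases this as a forward induction on $\mathbb{P}(X_j=x,A_j=a)>0$ over all of $\mathcal{Y}$, while you lower-bound by the atom of a single backward-constructed trajectory; this is only a cosmetic difference, and your version makes the use of the reachability hypothesis and the base case more explicit than the paper's one-line appeal to induction.
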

\begin{proof}
	It suffices to show the proof for the case $\widetilde{\mathcal{X}}_j=\{x\},\widetilde{\mathcal{A}}_j=\{a\}$ for $(x,a)\in\mathcal{Y}$. For the mentioned case, we have
	\begin{align*}
		&\bar{\mu}_{\infty}^\pi((\mathcal{X}\times\mathcal{A})^{j}\times\widetilde{\mathcal{X}}_j\times\widetilde{\mathcal{A}}_j\times(\mathcal{X}\times\mathcal{A})^{\mathbb{N}\setminus\{0,1,\dots,j\}})\\
		&=\bar{\mu}_{j+1}^\pi((\mathcal{X}\times\mathcal{A})^{j}\times\{x\}\times\{a\})\\
		&= \mathbb{P}(X_{j}=x,A_{j}=a)\\
		&= \mathbb{P}(A_{j}=a\mid X_{j}=x)\mathbb{P}(X_j=x)\\
		&= \mathbb{P}(A_{j}=a\mid X_{j}=x)\sum_{(x',a')\in\mathcal{Y}}\mathbb{P}(X_j=x\mid X_{j-1}=x',A_{j-1}=a')\mathbb{P}(X_{j-1}=x',A_{j-1}=a' ).
	\end{align*}
	From the equality, we can obtain the positivity by induction.
\end{proof}
For $\widetilde{\mathcal{X}}_j\subset\mathcal{X},\widetilde{\mathcal{A}}_j\subset\mathcal{A}$ and $\widetilde{\mathcal{X}}_j\times\widetilde{\mathcal{A}}_j\cap\mathcal{Y}\ne\emptyset$, we can define a probability measure $\bar{\mu}_{j+1,\infty}^\pi(\cdot\mid \widetilde{\mathcal{X}}_{j},\widetilde{\mathcal{A}}_{j})$ over $(\mathcal{X}\times\mathcal{A})^{\mathbb{N}\setminus\{0,1,\dots,j\}}$ as follows
	\begin{align*}
		\bar{\mu}_{j+1,\infty}^\pi(\widetilde{\mathcal{Y}}_{j+1,\infty}\mid \widetilde{\mathcal{X}}_j,\widetilde{\mathcal{A}}_j)=\frac{\bar{\mu}_{\infty}^\pi( (\mathcal{X}\times\mathcal{A})^{j}\times\widetilde{\mathcal{X}}_j\times\widetilde{\mathcal{A}}_j\times\widetilde{\mathcal{Y}}_{j+1,\infty})}{\bar{\mu}_{\infty}^\pi((\mathcal{X}\times\mathcal{A})^{j}\times\widetilde{\mathcal{X}}_j\times\widetilde{\mathcal{A}}_j\times(\mathcal{X}\times\mathcal{A})^{\mathbb{N}\setminus\{0,1,\dots,j\}})},
	\end{align*}
	for all Borel measurable $\widetilde{\mathcal{Y}}_{j+1,\infty}$ of $(\mathcal{X}\times\mathcal{A})^{\mathbb{N}\setminus\{0,1,\dots,j\}}$. 

	For bounded measurable function $g:\mathcal{Y}^{\mathbb{N}\setminus\{0,\dots,j\}}\to\mathbb{R}$, we define
	\begin{align*}
		\mathbb{E}_{j+1,\infty}^\pi[g\mid \widetilde{\mathcal{X}}_j,\widetilde{\mathcal{A}}_j]=\int_{(\mathcal{X}\times\mathcal{A})^{\mathbb{N}\setminus\{0,1,\dots,j\}}}g\bar{\mu}_{j+1,\infty}^\pi(\cdot\mid\widetilde{\mathcal{X}}_j,\widetilde{\mathcal{A}}_j).
	\end{align*}

	We now discuss the missing data model. In the Markov Decision Process, for each $j\ge0$, we get the state $x_j\in\mathcal{X}$, the action $a_j\in\mathcal{A}$ is then chosen to determine the reward $\rho$ and transition $\Gamma$. In the missing data model, we only have partial information of the state. At time $j$, we consider a countable set $\mathcal{X}_j^o\subset\mathbb{R}^{d_j}$, where $d_j\le d$ as the observed state space. The observation is characterized by the index set $I_j^o\subset\{1,\dots,d\}$ and a projection map $\mathcal{O}_j:\mathcal{X}\to\mathcal{X}_{j}^o,(x^i)_{i=1}^d\mapsto (x^i)_{i\in I_j^o}$.

\medskip

Now, we introduce the new concepts of \emph{partially ignorability} and \emph{relative ignorability}.

\begin{definition}[Partial Ignorability]
	The Markov Decision Process is called partially ignorable at time $j+1$ if the following conditions are met:
	\begin{itemize}
		\item There exists a partition $\{1,\dots,d\}$ into $I_U,I_W$ so that the random variable $X_{j+1}=(X_{j+1}^i)_{i=1}^{d}$, where $U=(X_{j+1}^i)_{i\in I_U}, W=(X_{j+1}^i)_{i\in I_W}$ are two independent random variables.
			The independence in here means that there exist distributions $\Gamma_U,\Gamma_W$ such that
			\[\Gamma_U(\mathcal{U}\mid x,a)\Gamma_W(\mathcal{W}\mid x,a)=\Gamma( (\mathcal{U},\mathcal{W})\mid x,a)\]
			for $\mathcal{U},\mathcal{W}$ such that $(\mathcal{U},\mathcal{W})=\{x\in\mathbb{R}^d, (x^{i})_{i\in I_U}\in\mathcal{U},(x^{i})_{i\in I_W}\in\mathcal{W} \}\subset\mathcal{X}$ and $(x,a)\in\mathcal{Y}$. 
		\item The action can be decided by $U$, that is for all $\pi\in\Pi$, there exists $\pi_U$ such that
			\[\pi(\{a\}\mid (x^i)_{i=1}^d)=\pi_U(\{a\}\mid (x^i)_{i\in I_U})\]
			for all $(x^i)_{i=1}^d\in\mathcal{X}$.
		\item Finally, we have
			\[\Gamma_U(\mathcal{U}\mid x,a)=\Gamma_U(\mathcal{U}\mid x',a),\quad\forall (x,a),(x',a)\in\mathcal{Y},\mathcal{O}_j(x)=\mathcal{O}_j(x'),\]	
			and for all $(\mathcal{U},\mathcal{W})\subset\mathcal{X}$.
	\end{itemize}
\end{definition}

For a bounded $g:\mathcal{Y}\to\mathbb{R}$, we can embed $g$ into a function $(g)_k$, which maps $\mathcal{Y}^{\mathbb{N}\setminus\{0,1,\dots,j\}}$, defined by
\[(g)_k( (x_{l},a_{l})_{l\ge j+1})=g(x_k,a_k).\]

\begin{definition}[Relative Ignorability]
	We consider the observed operators $(\mathcal{O}_j)_{j\ge0}$ with the observed state space $\mathcal{X}_{j}^{o}$. Let $g:\mathcal{Y}\to\mathbb{R}$ be a bounded function. We say that the missing model is relatively ignorable with respect to $g$ at time $j$ for policy $\pi$ if 
	\begin{align*}
		\mathbb{E}_{j+1,\infty}^\pi\Big[(g)_{j+1}\mid \{x\},\{a\}\Big]=\mathbb{E}_{j+1,\infty}^{\pi}\Big[(g)_{j+1}\mid \{x'\},\{a\}\Big],
\end{align*}
for all $(x,a),(x',a)\in\mathcal{Y}$ and $\mathcal{O}_j(x)=\mathcal{O}_j(x')$.
\end{definition}

\begin{figure}
\centering
\includegraphics[width=300pt, height=200pt]{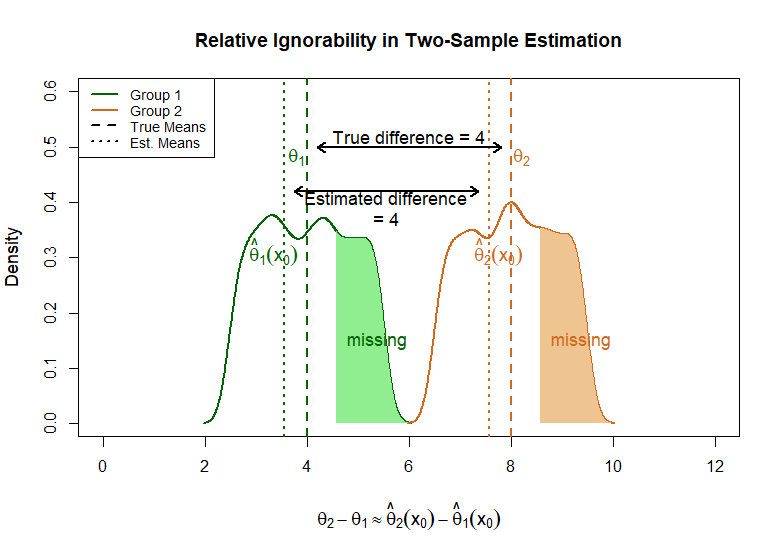}
\label{fig:heuristicri}
\caption[A heuristic example of relative ignorability.]{A heuristic example of relative ignorability in a group mean difference estimation problem. Note that while each estimated group mean is biased, the estimated difference in means is accurate, because missingness affects both groups equivalently.}
\end{figure}

\section{Main Results}

\subsection{Bellman Operator}\label{Bell}

We assume the following:
\begin{enumerate}[label=(A\arabic*),series=assumptions]
	\item \label{AMDP} The Markov Decision Process is partially ingorable for some fixed $I_U$ at any time $j+1$, where $j\ge0$.
	\item \label{Ar} The missing model is relatively ignorable to $\rho$ at any time $j\ge0$ and for all the policy $\pi$ in $\Pi$.
\end{enumerate}
{\bf Example:} Suppose we have $\mathcal{X}\subset\mathbb{R}^4$, $\mathcal{A}\subset\mathbb{R}$. For a $j\ge0$, we have $X_{j+1}=( (X_{j+1}^{i})_{i\in\{1,2\}},(X_{j+1}^{i})_{i\in\{3,4\}})$, and the Markov Decision Process is $(X_{j+1}^{i})_{i\in\{1,2\}}$--partially ignorable. We consider the observed operator at any time $j$ is either \[(x^1,x^2,x^3,x^4)\mapsto (x^1,x^2) \text{ or } (x^1,x^2,x^3,x^4)\mapsto (x^1,x^2,x^3).\]
In other words, in this process, the first two components of the next states is decided by the first two components of the previous states, and the action is also decided based on the first two component, missing $x^3$ or $x^4$ is ignorable.

If the reward is
\[\rho(x^1,x^2,x^3,x^4,a)=\frac{|ax_1|}{|ax_1|+1},\]
then the missing model is relatively ignorable for $\rho$ at any time $j\ge0$ since $\rho$ depends only on the first component of the state.

We back to the general settings, the value function $V_j:\mathcal{Y}^{\mathbb{N}\setminus\{0,1,\dots,j-1\}}\to\mathbb{R}$ at time $j$ is defined by
\begin{align*}
	V_j( (x_k,v_k)_{k\ge j})=\sum_{k=j}^{\infty}\gamma^{k-j}\rho(x_k,a_k).
\end{align*}
Since $\rho$ is bounded and $\gamma\in(0,1)$, the series converges absolutely for given $(x_k,v_k)\in\mathcal{Y}$.

For $j\ge0$, the \emph{action-value} function $q^\pi_j:\mathcal{Y}\to\mathbb{R}$ is defined by
\begin{align*}
	q^\pi_j(x,a)=\rho(x,a)+\gamma\mathbb{E}^\pi_{j+1,\infty}[V_{j+1}\mid \{x\},\{a\}].
\end{align*}
We will have
\begin{align*}
	q^\pi_j(x,a)=\rho(x,a)+\gamma\mathbb{E}^\pi_{j+1,\infty}[(q^\pi_{j+1})_{j+1}\mid \{x\},\{a\}].
\end{align*}

\begin{definition}[Bellman Operator]
\label{def:margbellopt}
For $j\ge0$ and $Q:\mathcal{Y}\to\mathbb{R}$ is a bounded function , the Bellman operator of policy $\pi$ at time $j$ is defined by
\begin{align*}
	T_j^\pi Q(x,a) := \rho(x,a)+\gamma\mathbb{E}^\pi_{j+1,\infty}[(Q)_{j+1}\mid \{x\},\{a\}],
\end{align*}
for $(x,a)\in\mathcal{Y}$. We also define the Bellman optimality operator at time $j$ by
\begin{align*}
	T_j^\ast Q(x,a) := \rho(x,a)+\gamma\sup_{\pi\in\Pi}\mathbb{E}^\pi_{j+1,\infty}[ (Q)_{j+1}\mid \{x\},\{a\}],
\end{align*}
for $(x,a)\in\mathcal{Y}$.

\end{definition}

We have the following lemma:
\begin{lemma}
	$T_j^\pi$ and $T_j^\ast$ are contractive mappings with factor $\gamma$ in the $\sup$-norm.
\end{lemma}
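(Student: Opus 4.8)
The plan is to reduce the statement to a pointwise estimate. Write $\|Q\|_\infty := \sup_{(x,a)\in\mathcal{Y}}|Q(x,a)|$ for a bounded $Q:\mathcal{Y}\to\mathbb{R}$. It suffices to show that for any two bounded $Q_1,Q_2:\mathcal{Y}\to\mathbb{R}$ and every $(x,a)\in\mathcal{Y}$ one has $|T_j^\pi Q_1(x,a)-T_j^\pi Q_2(x,a)|\le\gamma\|Q_1-Q_2\|_\infty$, and likewise with $T_j^\ast$ in place of $T_j^\pi$; taking the supremum over $(x,a)\in\mathcal{Y}$ then gives the two contraction estimates. The first thing to note is that the reward term $\rho(x,a)$ appears in both $T_j^\pi Q_1$ and $T_j^\pi Q_2$ and therefore cancels in the difference, so only the discounted conditional-expectation term needs to be controlled.

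For $T_j^\pi$ I would argue as follows. Since the embedded function $(Q)_{j+1}$ depends on a trajectory $(x_l,a_l)_{l\ge j+1}$ only through its $(j+1)$-st coordinate, we have the identity $(Q_1)_{j+1}-(Q_2)_{j+1}=(Q_1-Q_2)_{j+1}$ as bounded measurable functions on $\mathcal{Y}^{\mathbb{N}\setminus\{0,1,\dots,j\}}$. By linearity of the integral against the probability measure $\bar\mu^\pi_{j+1,\infty}(\cdot\mid\{x\},\{a\})$, it follows that $T_j^\pi Q_1(x,a)-T_j^\pi Q_2(x,a)=\gamma\,\mathbb{E}^\pi_{j+1,\infty}[(Q_1-Q_2)_{j+1}\mid\{x\},\{a\}]$. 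Because $\bar\mu^\pi_{j+1,\infty}(\cdot\mid\{x\},\{a\})$ is a probability measure supported in $\mathcal{Y}^{\mathbb{N}\setminus\{0,1,\dots,j\}}$, the integral of $(Q_1-Q_2)_{j+1}$ against it is bounded in absolute value by $\sup_{(x',a')\in\mathcal{Y}}|Q_1(x',a')-Q_2(x',a')|=\|Q_1-Q_2\|_\infty$, which gives the desired pointwise bound with the factor $\gamma$.

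For $T_j^\ast$ the extra ingredient is the elementary inequality $|\sup_{\pi\in\Pi}f(\pi)-\sup_{\pi\in\Pi}g(\pi)|\le\sup_{\pi\in\Pi}|f(\pi)-g(\pi)|$, which I would apply with $f(\pi)=\mathbb{E}^\pi_{j+1,\infty}[(Q_1)_{j+1}\mid\{x\},\{a\}]$ and $g(\pi)=\mathbb{E}^\pi_{j+1,\infty}[(Q_2)_{j+1}\mid\{x\},\{a\}]$. Combining this with the per-policy estimate from the previous paragraph, $\sup_{\pi\in\Pi}|f(\pi)-g(\pi)|=\sup_{\pi\in\Pi}|\mathbb{E}^\pi_{j+1,\infty}[(Q_1-Q_2)_{j+1}\mid\{x\},\{a\}]|\le\|Q_1-Q_2\|_\infty$, so $|T_j^\ast Q_1(x,a)-T_j^\ast Q_2(x,a)|\le\gamma\|Q_1-Q_2\|_\infty$ for every $(x,a)\in\mathcal{Y}$.

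The one point that needs care — and the closest thing to an obstacle — is the measure-theoretic bookkeeping rather than anything deep: one must check that $\mathbb{E}^\pi_{j+1,\infty}[\cdot\mid\{x\},\{a\}]$ is well defined for every $(x,a)\in\mathcal{Y}$, which is precisely the positivity $\bar\mu^\pi_\infty((\mathcal{X}\times\mathcal{A})^j\times\{x\}\times\{a\}\times(\mathcal{X}\times\mathcal{A})^{\mathbb{N}\setminus\{0,1,\dots,j\}})>0$ supplied by Lemma 1, and that $(Q_1-Q_2)_{j+1}$ is bounded and measurable, which is immediate from the definition of the embedding $(\cdot)_{j+1}$. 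Everything else is the classical contraction argument for Bellman operators; the only mild novelty here is that the ``one-step'' expectation is replaced by an expectation over the entire time-shifted trajectory, but since $(Q)_{j+1}$ ignores all coordinates beyond index $j+1$ this changes nothing in the estimate.
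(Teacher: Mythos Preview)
Your argument is correct and follows essentially the same route as the paper: cancel the reward term, use linearity of $\mathbb{E}^\pi_{j+1,\infty}[\cdot\mid\{x\},\{a\}]$ together with the fact that it is an expectation against a probability measure to obtain the $\gamma$-contraction for $T_j^\pi$, and then invoke $|\sup_\pi f(\pi)-\sup_\pi g(\pi)|\le\sup_\pi|f(\pi)-g(\pi)|$ to pass to $T_j^\ast$. If anything, your version is slightly more careful than the paper's, since you make the absolute values explicit and note the role of Lemma~1 in guaranteeing that the conditional expectation is well defined for every $(x,a)\in\mathcal{Y}$.
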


\begin{proof}
	For any two bounded function $Q_1,Q_2:\mathcal{Y}\to\mathbb{R}$, we estimate
	\begin{align*}
		\|T_j^\pi Q_1(x,a)-T_j^\pi Q_2(x,a)\|_{\infty}&=\gamma\sup_{(x,a)\in\mathcal{Y}}\mathbb{E}_{j+1,\infty}^\pi[(Q_1-Q_2)_{j+1}\mid \{x\},\{a\}]\\
		&\le\gamma\sup_{(x,a)\in\mathcal{Y}}\mathbb{E}_{j+1,\infty}^\pi[\|Q_1-Q_2\|_{\infty}\mid \{x\},\{a\}] =\gamma\|Q_1-Q_2\|_{\infty}.
	\end{align*}

	Then, for Bellman optimality operator, we get that
	\begin{align*}
		\|T_j^\ast Q_1(x,a)-T_j^\ast Q_2(x,a)\|_{\infty}&=\|\sup_{\pi\in\Pi}T_j^\pi Q_1(x,a)-\sup_{\pi\in\Pi}T_j^\pi Q_2(x,a)\|_{\infty}\\
		&\le\sup_{\pi\in\Pi} \|T_j^\pi Q_1(x,a)-T_j^\pi Q_2(x,a)\|_{\infty}\le\gamma\|Q_1-Q_2\|_{\infty}.
	\end{align*}
\end{proof}

\begin{lemma}
	Given Assumptions \ref{AMDP} and \ref{Ar}, if the missing model is relatively ignorable with respect to $Q$ at some time $j\ge1$ and for some $\widetilde{\pi}\in\Pi$, and if $I_U\supset I_{j}^o$, then the missing model is relatively ignorable with respect to $T_j^{\widetilde{\pi}} Q$ at any time $k\ge 0$ for all policy $\pi\in\Pi$.

	As a consequence, if the missing model is relatively ignorable with respect to $Q$ at some time $j\ge1$ for all $\pi\in\Pi$, and if $I_U\supset I_j^o$, then the missing model is relatively ignorable with respect to $T_j^\ast Q$ at any time $k\ge0$ for all policy $\pi\in\Pi$.

\end{lemma}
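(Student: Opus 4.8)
The plan is to isolate one structural fact and then apply it twice. Call a bounded $f:\mathcal{Y}\to\mathbb{R}$ \emph{$I_U$-measurable} if $f(x,a)$ depends on $x$ only through $(x^i)_{i\in I_U}$, i.e.\ $f(x,a)=\hat f\big((x^i)_{i\in I_U},a\big)$ for some $\hat f$. \textbf{Sub-claim:} under Assumption~\ref{AMDP}, every $I_U$-measurable $f$ is relatively ignorable with respect to $f$ at every time $k\ge0$ and for every $\pi\in\Pi$. To prove it, fix $k$ and $\pi$ and unfold the definitions: since $\bar{\mu}^\pi_{k+1,\infty}(\cdot\mid\{x\},\{a\})$ is the conditional law of the future given $X_k=x,A_k=a$, under which $X_{k+1}\sim\Gamma(\cdot\mid x,a)$ and then $A_{k+1}\sim\pi(\cdot\mid X_{k+1})$, we have
\begin{align*}
\mathbb{E}^\pi_{k+1,\infty}\big[(f)_{k+1}\mid\{x\},\{a\}\big]
&=\int_{\mathcal{X}}\int_{\mathcal{A}}\hat f\big((x_{k+1}^i)_{i\in I_U},a_{k+1}\big)\,\pi(da_{k+1}\mid x_{k+1})\,\Gamma(dx_{k+1}\mid x,a).
\end{align*}
By partial ignorability at time $k+1$, $\Gamma(dx_{k+1}\mid x,a)$ factors as $\Gamma_U(du\mid x,a)\,\Gamma_W(dw\mid x,a)$ with $u=(x_{k+1}^i)_{i\in I_U}$, and $\pi(\cdot\mid x_{k+1})=\pi_U(\cdot\mid u)$; since $\hat f$ ignores $w$, the $\Gamma_W$-factor integrates to $1$, leaving
\begin{align*}
\mathbb{E}^\pi_{k+1,\infty}\big[(f)_{k+1}\mid\{x\},\{a\}\big]=\int\Big(\int\hat f(u,b)\,\pi_U(db\mid u)\Big)\,\Gamma_U(du\mid x,a).
\end{align*}
The third clause of partial ignorability gives $\Gamma_U(\cdot\mid x,a)=\Gamma_U(\cdot\mid x',a)$ whenever $\mathcal{O}_k(x)=\mathcal{O}_k(x')$, so the right-hand side depends on $(x,a)$ only through $\big(\mathcal{O}_k(x),a\big)$; that is exactly relative ignorability with respect to $f$ at time $k$ for $\pi$.

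Next I would apply the sub-claim. Write $T_j^{\widetilde\pi}Q=\rho+\gamma h$ with $h(x,a):=\mathbb{E}^{\widetilde\pi}_{j+1,\infty}\big[(Q)_{j+1}\mid\{x\},\{a\}\big]$; since $|h|\le\|Q\|_\infty$, $T_j^{\widetilde\pi}Q$ is a bona fide bounded function on $\mathcal{Y}$. The hypothesis that the missing model is relatively ignorable with respect to $Q$ at time $j$ for $\widetilde\pi$ says precisely $h(x,a)=h(x',a)$ whenever $\mathcal{O}_j(x)=\mathcal{O}_j(x')$, i.e.\ $h$ depends on $x$ only through $\mathcal{O}_j(x)=(x^i)_{i\in I_j^o}$; because $I_j^o\subset I_U$, this makes $h$ an $I_U$-measurable function. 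Now fix any $k\ge0$ and $\pi\in\Pi$. Using $(T_j^{\widetilde\pi}Q)_{k+1}=(\rho)_{k+1}+\gamma(h)_{k+1}$ and linearity of $\mathbb{E}^\pi_{k+1,\infty}[\,\cdot\mid\{x\},\{a\}]$,
\begin{align*}
\mathbb{E}^\pi_{k+1,\infty}\big[(T_j^{\widetilde\pi}Q)_{k+1}\mid\{x\},\{a\}\big]
&=\mathbb{E}^\pi_{k+1,\infty}\big[(\rho)_{k+1}\mid\{x\},\{a\}\big]\\
&\qquad+\gamma\,\mathbb{E}^\pi_{k+1,\infty}\big[(h)_{k+1}\mid\{x\},\{a\}\big].
\end{align*}
By Assumption~\ref{Ar} the first term depends on $(x,a)$ only through $\big(\mathcal{O}_k(x),a\big)$, and by the sub-claim applied to the $I_U$-measurable $h$ so does the second; hence the sum is unchanged when $x$ is replaced by any $x'$ with $\mathcal{O}_k(x)=\mathcal{O}_k(x')$. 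This is relative ignorability with respect to $T_j^{\widetilde\pi}Q$ at time $k$ for $\pi$, and $k,\pi$ were arbitrary, proving the first assertion.

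For the consequence I would argue identically after one extra remark: assume $Q$ is relatively ignorable at time $j$ for \emph{all} $\pi'\in\Pi$, set $h^{\pi'}(x,a):=\mathbb{E}^{\pi'}_{j+1,\infty}\big[(Q)_{j+1}\mid\{x\},\{a\}\big]$ and $\bar h:=\sup_{\pi'\in\Pi}h^{\pi'}$, which is bounded by $\|Q\|_\infty$. Each $h^{\pi'}$ depends on $x$ only through $\mathcal{O}_j(x)$, hence so does their pointwise supremum $\bar h$; since $I_j^o\subset I_U$, $\bar h$ is $I_U$-measurable. As $T_j^\ast Q=\rho+\gamma\bar h$, the linearity argument of the preceding paragraph with $\bar h$ in place of $h$ shows $T_j^\ast Q$ is relatively ignorable at every time $k\ge0$ for every $\pi\in\Pi$.

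The step I expect to be the main obstacle is keeping straight \emph{which} observation map is in force: relative ignorability of $T_j^{\widetilde\pi}Q$ must be verified at each time $k$ against $\mathcal{O}_k$, whereas the hypothesis only constrains $\mathcal{O}_j$. The resolution is the observation that relative ignorability of $Q$ at time $j$, together with $I_U\supset I_j^o$, promotes $h$ (and $\bar h$) to the time-\emph{independent} structural property of factoring through the $I_U$-coordinates; once that is established, partial ignorability at each time $k+1$ covers every time $k$ at once. Two routine verifications I would be careful with: that the $\Gamma_W$-factor genuinely integrates out in the sub-claim (this uses both the $\Gamma$-independence of $U,W$ and that $\pi$ acts only through $U$), and that boundedness survives the supremum so that the sub-claim really applies to $\bar h$.
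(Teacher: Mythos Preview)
Your argument is correct and follows essentially the same route as the paper: write $T_j^{\widetilde\pi}Q=\rho+\gamma h$ with $h(x,a)=\mathbb{E}^{\widetilde\pi}_{j+1,\infty}[(Q)_{j+1}\mid\{x\},\{a\}]$, use the hypothesis together with $I_j^o\subset I_U$ to see that $h$ depends on $x$ only through its $I_U$-coordinates, and then factor $\Gamma=\Gamma_U\Gamma_W$ and $\pi=\pi_U$ so that the $\Gamma_W$-part integrates out and the $\mathcal{O}_k$-invariance of $\Gamma_U$ gives relative ignorability at every time $k$. The only cosmetic difference is that you package the key computation as a reusable sub-claim about $I_U$-measurable functions, whereas the paper performs the same factorization inline on $h$ (their $E_j^{\widetilde\pi}Q$) directly.
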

\begin{proof}
	Since the missing model is relatively ignorable with respect to $\rho$ at any time for all policy, we need to show that the missing model is also relative ignorable with respect to
	\[E^{\widetilde{\pi}}_{j}Q(x,a)=\mathbb{E}^{\widetilde{\pi}}_{j+1,\infty}[(Q)_{j+1}\mid \{x\},\{a\}]\]
	at any time and for all policy.

	Since the missing model is relatively ignorable with respect to $Q$ at time $j$ for policy $\widetilde{\pi}$, we have
	\begin{align*}
		\mathbb{E}^{\widetilde{\pi}}_{j+1,\infty}[(Q)_{j+1}\mid \{x\},\{a\}]&=\mathbb{E}^{\widetilde{\pi}}_{j+1,\infty}[(Q)_{j+1}\mid \{x'\},\{a\}],
	\end{align*}
	for all $(x,a),(x',a)\in\mathcal{Y}$ and $\mathcal{O}_j(x)=\mathcal{O}_j(x')$. 

	Thus, we get
	\begin{align*}
		E_j^{\widetilde{\pi}} Q(x,a)=E_j^{\widetilde{\pi}} Q(x',a),
	\end{align*}
	for all $(x,a),(x',a)\in\mathcal{Y}$ and $\mathcal{O}_j(x)=\mathcal{O}_j(x')$. Because $I_U\supset I_j^o$, when $(x^i)_{i\in I_W}$ varies, $T_j^{\widetilde{\pi}}(x,a)$ stays fixed.

For $(x,a),(x',a)\in\mathcal{Y}$ and $\mathcal{O}_{k}(x)=\mathcal{O}_{k}(x')$, we compute
\begin{align*}
	&\mathbb{E}^\pi_{k+1,\infty}[(E_j^{\widetilde{\pi}} Q)_{k+1} \mid \{x\},\{a\}]\\
	&=\frac{\bar{\mu}^\pi_{j}((\mathcal{X}\times\mathcal{A})^{k}\times\{x\}\times\{a\})}{\bar{\mu}_{\infty}^\pi((\mathcal{X}\times\mathcal{A})^{k}\times\{x\}\times\{a\}\times(\mathcal{X}\times\mathcal{A})^{\mathbb{N}\setminus\{0,1,\dots,k\}})}\int_{\mathcal{X}\times\mathcal{A}}E_j^{\widetilde{\pi}}(\hat{x},\hat{a})\pi(d\hat{a}\mid\hat{x})\Gamma(d\hat{x}\mid x,a)\\
	&= \int_{\mathcal{X}\times\mathcal{A}}E_j^{\widetilde{\pi}}(\hat{x},\hat{a})\pi_U(d\hat{a}\mid (\hat{x}^i)_{i\in I_U})\Gamma_U(d(\hat{x}^i)_{i\in I_U}\mid x,a)\Gamma_W(d(\hat{x}^i)_{i\in I_W}\mid x,a)\\
	&= \int_{\mathcal{X}\times\mathcal{A}}E_j^{\widetilde{\pi}}(\hat{x},\hat{a})\pi_U(d\hat{a}\mid (\hat{x}^i)_{i\in I_U})\Gamma_U(d(\hat{x}^i)_{i\in I_U}\mid x',a)\Gamma_W(d(\hat{x}^i)_{i\in I_W}\mid x',a)\\
	&=\mathbb{E}^\pi_{k+1,\infty}[(E_j^{\widetilde{\pi}} Q )_{k+1}\mid \{x'\},\{a\}].
\end{align*}

If the missing model is relatively ignorable with respect to $Q$ for all $\pi\in\Pi$ then
\[\sup_{\pi\in\Pi}\mathbb{E}^{\pi}_{j+1,\infty}[(Q)_{j+1}\mid \{x\},\{a\}]= \sup_{\pi\in\Pi}\mathbb{E}^{\pi}_{j+1,\infty}[(Q)_{j+1}\mid \{x'\},\{a\}],\]
for all $(x,a),(x',a)\in\mathcal{Y}$ and $\mathcal{O}_j(x)=\mathcal{O}_j(x')$.
We perform the previous computation again to obtain that the missing model is relatively ignorable with respect to $T^\ast_j Q$ at any time $k\ge 0$ for all policy $\pi\in\Pi$.
\end{proof}

The lemma means that the Bellman operator is relatively ignorable if we have the additional assumption $I_U\supset I_j^o$. Formally, we assume
\begin{enumerate}[label=(A\arabic*),resume=assumptions]
	\item \label{AIo} For $U$ is the component in Assumption \ref{AMDP}, there exists $j\ge1$ such that $I_U\supset I_j^o$.
\end{enumerate}

Since $T_j^\ast$ is a contractive mapping for all $j\ge0$. We start with $Q=0$, which is relatively ignorable. By the Banach Fixed-Point Theorem, the sequence
\[Q,T_j^\ast Q,(T_j^\ast)^2Q,(T_j^\ast)^3Q,\dots\]
converges in $L^\infty$-norm to the fixed-point $Q_j^\ast$ of $T_j^\ast$ and this sequence is a sequence of relatively ignorable functions.

\subsection{Convergence Under Relative Ignorability}

In Q-learning, we consider the learing factor $\alpha_n\in(0,1)$, which satisfies the following assumption:
\begin{enumerate}[label=(A\arabic*),resume=assumptions]
	\item \label{Aalpha}
		\[\sum_{n=1}^{\infty}\alpha_n=\infty\quad\text{and}\quad \sum_{n=1}^{\infty}\alpha_n^2<\infty.\]
\end{enumerate}

For a given $j\ge0$, the algorithm for $Q$-learning rule is given by
\begin{align*}
	Q^{n}(x,a)=(1-\alpha_n)Q^{n-1}(x,a)+\alpha_n T_j^\ast Q^{n-1}(x,a),
\end{align*}
for the initial function $Q^0=0$.

\begin{theorem}[Q-Learning Convergence]
	For the Markov Decision Process with Assumption \ref{Ar} and the $Q$-learning with Assumption \ref{Aalpha}, the sequence $Q^n(X_j,A_j)$ converges to $Q_j^\ast(X_j,A_j)$ with probability $1$ for any policy $\pi\in\Pi$.
\end{theorem}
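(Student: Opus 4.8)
The plan is to recast the Q-learning recursion as an asynchronous stochastic-approximation scheme and to apply the classical almost-sure convergence machinery for such schemes (in the spirit of the Jaakkola--Jordan--Singh and Tsitsiklis analyses), with the marginal Bellman optimality operator $T_j^\ast$ playing the role of the contractive mean field. The two structural facts I will import from the preceding development are: (i) that $T_j^\ast$ is a $\gamma$-contraction in the $\sup$-norm with unique fixed point $Q_j^\ast$, and (ii) that every iterate produced by the scheme remains relatively ignorable, so that all the conditional expectations appearing below are well defined as functions of the observed pair $(x,a)\in\mathcal{Y}$ rather than of the full, partially unobserved state.

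First I would fix the time index $j$, introduce the natural filtration $(\mathcal{F}_n)_{n\ge0}$ generated by the sampled trajectory up to step $n$, and rewrite the update in incremental form
\[
Q^{n}(x,a)=Q^{n-1}(x,a)+\alpha_n(x,a)\bigl(F_n(x,a)-Q^{n-1}(x,a)\bigr),
\]
where $\alpha_n(x,a)=\alpha_n$ if the pair $(x,a)$ is sampled at step $n$ and $\alpha_n(x,a)=0$ otherwise, and $F_n(x,a)$ is the sampled Bellman target built from the observed reward and observed successor state. Setting $\Delta^n:=Q^n-Q_j^\ast$ and subtracting the fixed-point identity $Q_j^\ast=T_j^\ast Q_j^\ast$, I decompose the noisy increment into a mean part and a martingale-difference part,
\[
F_n(x,a)-Q_j^\ast(x,a)=\underbrace{\bigl(T_j^\ast Q^{n-1}(x,a)-T_j^\ast Q_j^\ast(x,a)\bigr)}_{\text{contractive}}+\underbrace{\bigl(F_n(x,a)-T_j^\ast Q^{n-1}(x,a)\bigr)}_{=:\,w_n(x,a)}.
\]
By the contraction lemma the first bracket is bounded in absolute value by $\gamma\|\Delta^{n-1}\|_\infty$, which supplies the required strict discounting of the mean field.

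The crux of the argument, and the step I expect to be the main obstacle, is verifying that $w_n$ is a genuine $\mathcal{F}_{n-1}$-martingale difference, i.e.\ that $\mathbb{E}[F_n(x,a)\mid\mathcal{F}_{n-1}]=T_j^\ast Q^{n-1}(x,a)$ \emph{despite} the missing state components. This is precisely where Assumption~\ref{Ar} enters: relative ignorability guarantees that $\mathbb{E}^\pi_{j+1,\infty}[(Q^{n-1})_{j+1}\mid\{x\},\{a\}]$ is invariant under replacing $x$ by any $x'$ with $\mathcal{O}_j(x)=\mathcal{O}_j(x')$, so the conditional mean of the target computed from \emph{observed} data coincides with the marginal operator $T_j^\ast Q^{n-1}$ and carries no bias from the unobserved coordinates; the preservation lemma ensures that $Q^{n-1}$ is itself relatively ignorable, so this identity holds at every step. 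Boundedness of $\rho$ together with $\gamma\in(0,1)$ bounds every iterate uniformly, yielding the conditional second-moment bound $\mathbb{E}[w_n(x,a)^2\mid\mathcal{F}_{n-1}]\le C\bigl(1+\|\Delta^{n-1}\|_\infty^2\bigr)$.

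Finally I would establish the infinite-visitation condition: by Lemma~1 each cylinder over a pair $(x,a)\in\mathcal{Y}$ has strictly positive probability under $\bar\mu_\infty^\pi$ for every $\pi\in\Pi$, so a Borel--Cantelli argument gives that each pair is sampled infinitely often almost surely, whence the per-pair step sizes inherit $\sum_n\alpha_n(x,a)=\infty$ and $\sum_n\alpha_n(x,a)^2<\infty$ from Assumption~\ref{Aalpha}. With the contractive mean field, the conditionally centred bounded-variance noise, and the Robbins--Monro step-size conditions in hand, the asynchronous stochastic-approximation theorem yields $\Delta^n(x,a)\to0$ almost surely for each $(x,a)\in\mathcal{Y}$. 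Since $\mathcal{Y}$ is countable this holds simultaneously for all pairs off a single null set, and evaluating along the random pair $(X_j,A_j)$ gives $Q^n(X_j,A_j)\to Q_j^\ast(X_j,A_j)$ with probability $1$, as claimed.
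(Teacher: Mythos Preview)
Your proposal tackles a different, and harder, problem than the one the theorem actually poses. In the paper the update rule is
\[
Q^{n}(x,a)=(1-\alpha_n)Q^{n-1}(x,a)+\alpha_n\, T_j^\ast Q^{n-1}(x,a),
\]
which is \emph{deterministic} and \emph{synchronous}: every pair $(x,a)\in\mathcal{Y}$ is updated at every step, and the target is the exact operator value $T_j^\ast Q^{n-1}(x,a)$, not a one-sample estimate of it. There is no sampled transition, no asynchronous visitation, and hence no martingale noise $w_n$ to control. The paper's proof is accordingly elementary: with $\Delta^n=Q^{n-1}-Q_j^\ast$ one gets directly
\[
\|\Delta^{n+1}\|_\infty\le(1-(1-\gamma)\alpha_n)\|\Delta^n\|_\infty\le e^{-(1-\gamma)\alpha_n}\|\Delta^n\|_\infty
\]
from the $\gamma$-contraction of $T_j^\ast$, and iterating gives $\|\Delta^{m+1}\|_\infty\le e^{-(1-\gamma)\sum_{n\le m}\alpha_n}\|\Delta^1\|_\infty\to0$ because $\sum_n\alpha_n=\infty$. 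The ``with probability~$1$'' in the statement is essentially cosmetic: since $\mathbb{P}(X_j=x,A_j=a)>0$ for every $(x,a)\in\mathcal{Y}$, sup-norm convergence on $\mathcal{Y}$ is the same as $L^\infty(\Omega)$ convergence of $Q^n(X_j,A_j)$. Note in particular that the paper's argument uses neither Assumption~\ref{Ar} nor the square-summability half of Assumption~\ref{Aalpha}.

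Your Jaakkola--Jordan--Singh/Tsitsiklis machinery is the right tool for \emph{genuine} sampled Q-learning, and the way you isolate relative ignorability as the device that makes the observed-data conditional expectation unbiased for $T_j^\ast Q^{n-1}$ is exactly the insight one would need in that setting. But for the theorem as written that apparatus is superfluous, and several of its ingredients---the per-pair step sizes $\alpha_n(x,a)$, the infinite-visitation step, the conditional second-moment bound---have no counterpart in the deterministic iteration being analysed. If your aim is to match the paper, drop the sampling and noise decomposition and run the three-line contraction estimate above. If instead you intend to prove a stochastic version, say so explicitly, and then also tighten the visitation argument: Lemma~1 gives only positive marginal probability of each pair at a fixed time, which does not by itself imply ``visited infinitely often almost surely'' without an additional independence or ergodicity hypothesis feeding the second Borel--Cantelli lemma.
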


\begin{proof}[Proof]
	Fix a policy $\pi\in\Pi$, we consider
\[\delta^n(x,a)= T_j^\ast Q^{n-1}(x,a)-Q_j^\ast(x,a),\]
and
\[\Delta^n(x,a)=Q^{n-1}(x,a)-Q_j^\ast(x,a).\]
From the $Q$-learning rule, we get
\[\Delta^{n+1}(x,a)=(1-\alpha_n)\Delta^n(x,a)+\alpha_n \delta^n(x,a).\]
We need to show that $\Delta^n(X_j,A_j)$ converges to $0$ with probability $1$. 

Since $Q^\ast_j$ is a fixed point of $T^\ast_j$, and $T_j^\ast$ is a contractive map with constant $\gamma$, we estimate that
\begin{align*}
	\|\delta^{n}\|_{\infty}&=\|T_j^\ast Q^{n-1}-T_j^\ast Q^{\ast}_j\|_{\infty}\\
	&\le \gamma\|Q^{n-1}-Q^\ast_j\|_{\infty}=\gamma\|\Delta^n\|_{\infty}.
\end{align*}
As $\mathbb{P}(X_j=x,A_j=a)>0$ for all $(x,a)\in\mathcal{Y}$, we obtain that 
\[\|\delta^{n}(X_j,A_j)\|_{L^\infty(\Omega)}\le\gamma \|\Delta^n(X_j,A_j)\|_{L^\infty(\Omega)}.\]
As $\alpha_n,\gamma\in(0,1)$, we observe that
\[\|\Delta^{n+1}(X_j,A_j)\|_{L^\infty(\Omega)}\le (1-(1-\gamma)\alpha_n)\|\Delta^n(X_j,A_j)\|_{L^\infty(\Omega)}\le e^{-(1-\gamma)\alpha_n}\|\Delta^n(X_j,A_j)\|_{L^\infty(\Omega)}.\]
Hence, inductively, we obtain
\[\|\Delta^{m+1}(X_j,A_j)\|_{L^\infty(\Omega)}\le e^{-(1-\gamma)\sum_{n=1}^{m}\alpha_n}\|\Delta^1(X_j,A_j)\|_{L^\infty(\Omega)}.\]
As $\sum_{n=1}^{\infty}\alpha_n=\infty$, we get
\[\|\Delta^{m+1}(X_j,A_j)\|_{L^\infty(\Omega)}\to0\]
as $m\to\infty$. This leads to $Q^n(X_j,A_j)$ converges with probability $1$ to $Q^\ast_j(X_j,A_j)$.

\end{proof}

Figure \ref{fig:pomdp} visualizes a POMDP. States $x$ transition according to dynamics $\Gamma$, but the agent only observes partial information $x_o$ (with the remaining components $x_m$ missing). The agent maintains belief states $b(x)$ representing probability distributions over possible underlying states. Under relative ignorability conditions, the belief-space policy $\pi$ can still converge to the optimal policy $\pi^*$ through successive applications of the Q-learning update equation\citep{watkins1989} which we denote $B_0$.

\begin{figure}[h]
\centering
\begin{tikzpicture}[
state/.style={circle, draw, minimum size=1.2cm},
    obs/.style={rectangle, draw, minimum size=1.2cm},
    belief/.style={circle, draw, minimum size=1.5cm, minimum width=2.5cm},
    >=stealth
]

\node[state] (s1) at (0,4) {$x_1$};
\node[state] (s2) at (3,4) {$x_2$};
\node[state] (s3) at (6,4) {$x_3$};
\node[state,dashed] (sj) at (9,4) {$x_m$};

\node[obs] (o1) at (1.5,2) {$x_{o1}$};
\node[obs] (o2) at (4.5,2) {$x_{o2}$};
\node[obs,dashed] (om) at (7.5,2) {$x_m$};

\node[belief] (b1) at (3,0) {$b(x_1,x_2)$};
\node[belief] (b2) at (7,0) {$b(x_2,x_3,x_j)$};

\draw[->] (s1) -- (s2) node[midway, above] {$\Gamma$};
\draw[->] (s2) -- (s3) node[midway, above] {$\Gamma$};
\draw[->] (s3) -- (sj) node[midway, above] {$\Gamma$};

\draw[->] (s1) -- (o1);
\draw[->] (s2) -- (o1);
\draw[->] (s2) -- (o2);
\draw[->] (s3) -- (o2);
\draw[->] (s3) -- (om);
\draw[->] (sj) -- (om);

\draw[->, thick] (o1) -- (b1);
\draw[->, thick] (o2) -- (b1);
\draw[->, thick] (o2) -- (b2);
\draw[->, thick] (om) -- (b2);

\draw[->, dashed, bend right=30] (b1) to node[midway, left] {$\pi(a|b)$} (0,2);
\draw[->, dashed, bend left=30] (b2) to node[midway, right] {$\pi(a|b)$} (9,2);

\draw[<->, dotted, thick] (b1) to[bend right=30] node[midway, below] {$B_0$} (b2);
\draw[<->, dotted, thick] (b2) to[bend right=45] node[near end, below] {$B_0(B_0)$} (10,0) node[right] {$B^*$};

\end{tikzpicture}
\caption[Visualization of a POMDP.]{Visualization of a POMDP. }
\label{fig:pomdp}
\end{figure}

\subsection{Examples of Relative Ignorability}
\paragraph{Example 1: Clinical Dosing Strategy.}
Consider a clinical decision problem where patient state $X_j = (X_{j,o}, X_{j,m})$ includes observed symptoms and unobserved genetic markers, with action space $\mathcal{A} = \{$chemotherapy, immunotherapy$\}$. Let $Q:\mathcal{X}\times\mathcal{A}\to\mathbb{R}$ be the progression indicating function.

 If genetic markers affect disease progression but not treatment response, then for all possible genetic marker values $x_m^{(1)}, x_m^{(2)} \in \Omega_m$:
 $$\arg\max_{a \in A} Q( (x_{j,o},x_m^{(1)}), a) = \arg\max_{a \in A} Q( (x_{j,o} , x_m^{(2)}), a)=a^\ast(x_{j,o}).$$
In this case, the optimal treatment depends only on the observed symptoms. Thus, one example of a policy \(\pi\) is
\[
\pi^o(a^\ast(x_{j,o}) \mid x_{j,o}) = \pi(a^\ast(x_{j,o}) \mid (x_{j,o}, x_m^{(1)})) = 1
\]
for any \( x_m^{(1)} \in \Omega_m \).
If the next observed symptoms depend only on the current symptoms and the treatment, then the model is partially ignorable.

Furthermore, if we define \( g(x_{j,o}, x_m^{(1)}) = \max_{a \in A} Q\left((x_{j,o}, x_m^{(1)}), a\right) \), then \( g \) is relatively ignorable.

 If genetic markers strongly influence treatment effectiveness, then there exist genetic marker values $x_m^{(1)}, x_m^{(2)}$ such that:
 $$\arg\max_{a \in A} Q( (x_{j,o} , x_m^{(1)}), a) \neq \arg\max_{a \in A} Q( (x_{j,o} , x_m^{(2)}), a)$$
For example, if $x_m^{(1)}$ indicates a mutation making immunotherapy optimal while $x_m^{(2)}$ indicates wild-type genes making chemotherapy optimal, then the same observed symptoms require different treatments based on genetics. In this counterexample, missingness is relatively non-ignorable, and standard Q-learning may converge to a suboptimal point. This scenario might occur if the presence of a wild-type gene is also associated with a phenotype that influences the prescriber's decision.

\paragraph{Example 2: Split-brain Tolerance in Distributed Systems}
Consider a distributed system with nodes containing customer data, where a network partition separates nodes into two groups that cannot communicate. Each partition may continue processing transactions, potentially creating conflicting states. Let $X_j = (X_{j,1}, X_{j,2})$ represent the system state, where $X_{j,1}$ contains data from partition 1 and $X_{j,2}$ contains data from partition 2. During a split brain event, each partition observes only its local state: partition 1 observes $X_{j,o} = X_{j,1}$ while $X_{j,m} = X_{j,2}$ is missing, and vice versa.

Not all inconsistencies matter equally for different application functions: Only data that is both conflicting and used by the specific application function being evaluated is relatively non-ignorable with respect to that function's decision-making.

Systems architects can use relative ignorability to implement \textit{selective degradation}: during split brain scenarios, functions that depend on relatively non-ignorable conflicting data can be temporarily disabled or routed to require manual approval, while functions that operate on relatively ignorable conflicts can continue operating normally.

Let $\hat{g}(X_j, A_j)$ represent an application function (such as ``approve customer transaction'' or ``calculate account balance'') that takes system state and proposed action as inputs. The missing components $X_{j,m}$ are relatively ignorable with respect to $\hat{g}$ if Equation \ref{eq:dsrl} holds (where $X_{j,o}$ is the observed information and $\Omega_m$ is the set of possible values $X_{j,m}$ can take), even when $X_{j,m}$ contains conflicting information.

\begin{equation}
\hat{g}( (X_{j,o},X_{j,m}), A_j) = \hat{g}( (X_{j,o},X'_{j,m}), A_j) \forall X_{j,m},X'_{j,m} \in \Omega_{m}
\label{eq:dsrl}
\end{equation}

 Assuming one has knowledge of the data dependencies for each application function, one could track which data elements are conflicting across partitions, and disable only the functions which use the conflicting information. This selective termination process would allow unaffected functions to continue to run, thereby minimizing the impact to the application while also mitigating the impact of information errors.

Consider, for example, a payment processing system which includes the following functions: i) Fraud detection, which uses the customer's transation history, and ii) Marketing recommendations, which is based on transaction history as well as the user profile (age, gender, address, etc.) iii) other capabilities such as purchasing, which uses neither transaction history or demographic information. Let us use $x'_j$ to denote the transaction history at time $j$ and $x''_j$ to denote the user's demographic data. 

Suppose that $x''_j$ is stored on both partitions and differs due to the split-brain problem. Traditional methods for handling this situation might include temporarily disabling the user's account to avoid compounding errors, since $x''_j$ is classically non-ignorable with respect to the application function for that user. However, this might not be necessary: Since the fraud detection capability and other functionality does not depend on $x''_j$, we might simply disable the marketing recommendations and allow the rest of the application functions to run normally. We can do this because $x''_j$ is \emph{relatively ignorable} with respect to fraud detection (and other functions).

\section{Discussion}

Classical Q-learning convergence theory \citep{watkins1989, jaakkola1994, tsitsiklis1997}, and its extensions to function approximation \citep{mnih2015} assume complete state observability. We have developed, here, a framework of relative ignorability that relaxes this fundamental requirement while preserving convergence guarantees. Our result eliminates the need for explicit POMDP modelling and estimation in certain cases, by specifying when such complexity is unnecessary.

Recent work on causal reinforcement learning \citep{zhang2020} shares similar motivations but focuses on confounding variables rather than general missing data patterns; our relative ignorability framework provides a more general unifying perspective on when partial observability can be safely ignored. Advantage learning \citep{harmon1996residual} also naturally connects to relative ignorability. Advantage learning focuses on learning the advantage function $A(x, a) = Q(x, a) - V(x)$, which represents relative action values rather than absolute Q-values. From a relative ignorability perspective, if missing components affect all actions equally, they may bias individual Q-values while preserving advantage rankings. The results described here could also be extended to show that advantage learning is more robust to certain types of missing data than standard Q-learning. Our results show that the Markov assumption can be relaxed when violations do not affect decision-making. This suggests that the observability requirements arise dynamically relative to task demands.

 There are a variety of future directions for our work. First, our theoretical result might be extended to Deep Q-learning, a popular modification of classical Q-learning which leverages Deep Neural Networks in order to estimate the Q-function. To probe the feasibility of this direction, we conducted a simulation which compared the performance of Deep Q-Learning under various relative ignorability conditions. The simulated environment consisted of a classic 2x2 gridworld with one goal square as well as a trap square, and potential actions consisted of moving left, right, up or down. In addition, we added a latent ``mode" variable which affected the layout: The complete state is $X_j = (position, mode)$ where $position \in \{(0,0), (0,1), (1,0), (1,1)\}$ and $mode \in \{0,1\}$. The agent only observes position; mode is missing. We consider two forms of the latent mode, one which is relatively ignorable, and one which is relatively non-ignorable. The relatively ignorable mode affects reward values only: if mode=1, the goal square yields 10 reward (8 if mode=0), and trap yields -10 penalty (-8 if mode =0). Note that since mode affects the reward even after conditioning on the observed state, the unobserved mode is classically non-ignorable, though relatively ignorable. Conversely, the relatively non-ignorable mode swaps the location of the goal with the location of the trap, consistently returning 10 reward at the goal and -10 at the trap. Additionally, the agent was penalized with -0.1 reward for each step, to incite efficient progress toward the goal.

 We performed Deep Q-learning using a two-layer neural network, with ReLU activation between the two layers. Each layer consisted of 64 nodes. For comparison with POMDP learning, we also consider a situation where the agent has access to a noisy signal of the mode. We generated this noisy signal as a random normal variable, updated at each timestep, with the mean equal to the value of the latent mode (0 or 1), and variance $\sigma^2 = 0.15$. The code to implement the environment step update follows:
 \begin{verbatim}
 def step(self, action_idx):
        action = ACTIONS[action_idx]
        dx, dy = ACTION_TO_DELTA[action]
        new_x = np.clip(self.pos[0] + dx, 0, GRID_SIZE - 1)
        new_y = np.clip(self.pos[1] + dy, 0, GRID_SIZE - 1)
        self.pos = (new_x, new_y)
        
        reward = -0.1
        done = False
        if self.pos == (0, 1):  # Goal
            reward = (10 if self.mode == 0 else 9) if self.relative_ignorability 
                else (-10 if self.mode == 0 else 10)
            done = True
        elif self.pos == (1, 0):  # Trap
            reward = -10 if self.relative_ignorability 
                else (10 if self.mode == 0 else -10)
            done = True

        # observe new evidence
        self.observed_variable = np.random.normal(loc=self.mode, scale=0.15)
    
        # update belief based on new observation
        self.update_belief(obs=self.observed_variable)
    
        return np.array([*self.pos, self.belief_mode],
        
        dtype=np.float32), reward, done
 \end{verbatim}

 For POMDP estimation, we performed a classic Bayesian belief update at each step as follows. Here, the belief mode is the agent's prior belief of what the current mode is, where prior belief is intialized agnostically to 0.5. 

\begin{verbatim}
def update_belief(self, obs):
    p1 = np.exp(-(obs - 1)**2 / 2)
    p0 = np.exp(-(obs - 0)**2 / 2)
    prior = self.belief_mode
    self.belief_mode = (p1 * prior) / (p1 * prior + p0 * (1 - prior) + 1e-8)
\end{verbatim}

We trained all models for 1000 episodes, using $\gamma = 0.9$, $\alpha = .001$, and epsilon decay rate $0.995$. To ensure stability of the results, we repeated the simulation with 5 different random seeds, and averaged the resulting reward curve for each mode (0 or 1, with 1 corresponding to the relatively non-ignorable case), model (POMDP or Vanilla), and training episode (1 to 1000) across the seeds. 

Figure \ref{fig:rl_gridworld} shows the 50-timepoint rolling mean of the 5-seed average reward curve from each algorithm and mode across training epochs. Despite the nonignorably missing ``mode" information, rewards obtained by vanilla Q-learning converge to 9: Note that 9 is the maximum possible reward, since it is the average of 8 and 10 (the goal rewards under each mode in the relatively ignorable setting). Moreover, vanilla Q-learning appears to be more efficient than POMDP in the relatively ignorable case, converging in fewer epochs. The slower convergence of POMDP Q-learning is understandable from an information theoretic perspective: In POMDP learning, the information in each observation must be shared across two estimation procedures - belief update and Q update - while in Vanilla Q-learning there is only one function to estimate. However, if the mode is non-ignorable, then Vanilla Q-learning does not converge, while POMDP estimation can still find the optimal policy.

 \begin{figure}[ht]
\centering
\includegraphics[width=350pt, height=180pt]{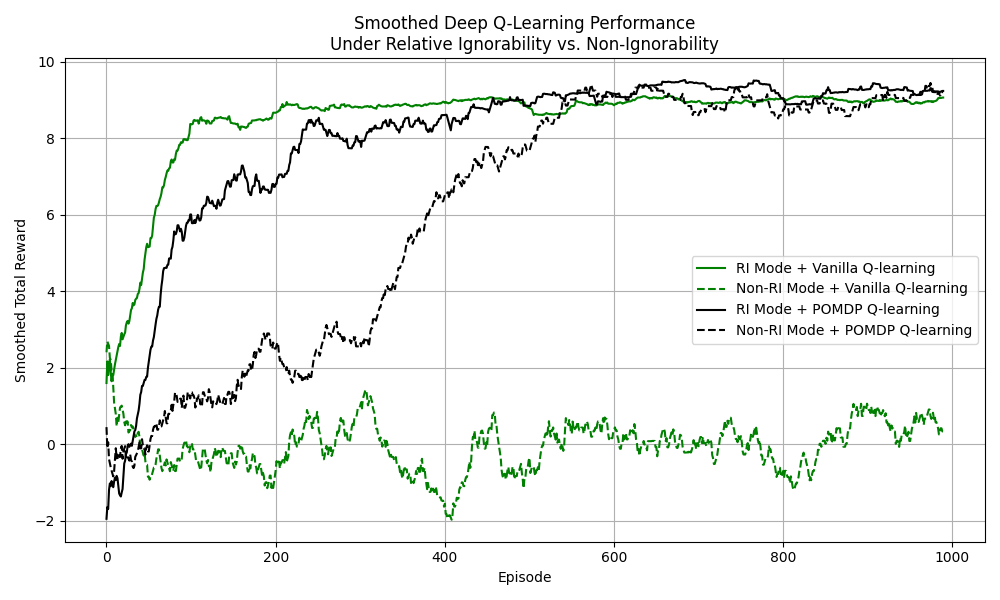}
\label{fig:rl_gridworld}
\caption{Reward curves from Vanilla and POMDP Q-learning under relatively ignorable and relatively non-ignorable latent mode. Vanilla Q-learning converges to the maximum obtainable reward faster than POMDP Q-learning under relative ignorability.}
\end{figure}

 In addition to Deep Q-learning, our framework could also extend to continuous state spaces, function approximation, and policy gradient methods. Connections to optimal control theory \citep{bertsekas2019} suggest further theoretical developments. In practice, determining relative ignorability requires domain knowledge or empirical validation. Future work should develop algorithms for automatically detecting when this condition holds. Methods based on the index of sensitivity to nonignorability developed by Troxel, Ma, and Heitjan \cite{troxel2004index} seem compelling.

\section{Conclusion}

We have introduced relative ignorability as a condition under which Q-learning converges despite missing state components. This framework relaxes classical assumptions while maintaining theoretical guarantees, offering a middle ground between full observability and complex POMDP solutions. Our results suggest that the curse of dimensionality in agentic AI may be mitigated by focusing on decision-relevant information rather than complete state reconstruction.

\bibliographystyle{plain}
		
\bibliography{bibliography}

\begin{thebibliography}{10}

\bibitem{bellman1957}
Richard Bellman.
\newblock {\em Dynamic Programming}.
\newblock Princeton University Press, 1957.

\bibitem{bertsekas2019}
Dimitri~P. Bertsekas.
\newblock {\em Reinforcement Learning and Optimal Control}.
\newblock Athena Scientific, 2019.

\bibitem{bertsekas1978stochastic}
Dimitri~P. Bertsekas and Steven~E. Shreve.
\newblock {\em Stochastic Optimal Control: The Discrete-Time Case}.
\newblock Academic Press, 1978.

\bibitem{diggle1994}
Peter Diggle and Michael~G. Kenward.
\newblock Informative drop-out in longitudinal data analysis.
\newblock {\em Journal of the Royal Statistical Society Series C: Applied Statistics}, 43(1):49--73, 1994.

\bibitem{harmon1996residual}
Mance~E. Harmon and Leemon~C. Baird.
\newblock Multi-player residual advantage learning with general function approximation.
\newblock {\em Machine Learning}, 23:165--187, 1996.

\bibitem{jaakkola1994}
Tommi Jaakkola, Michael~I. Jordan, and Satinder~P. Singh.
\newblock On the convergence of stochastic iterative dynamic programming algorithms.
\newblock {\em Neural Computation}, 6(6):1185--1201, 1994.

\bibitem{komorowski2018artificial}
Matthieu Komorowski, Leo~A Celi, Omar Badawi, Anthony~C Gordon, and A~Aldo Faisal.
\newblock The artificial intelligence clinician learns optimal treatment strategies for sepsis in intensive care.
\newblock {\em Nature medicine}, 24(11):1716--1720, 2018.

\bibitem{krause2025probabilistic}
Andreas Krause and Jonas H{\"u}botter.
\newblock Probabilistic artificial intelligence.
\newblock {\em arXiv preprint arXiv:2502.05244}, 2025.

\bibitem{kumar2014association}
Gagan Kumar, Amit Taneja, Tilottama Majumdar, Elizabeth~R Jacobs, Jeff Whittle, Rahul Nanchal, et~al.
\newblock The association of lacking insurance with outcomes of severe sepsis: retrospective analysis of an administrative database.
\newblock {\em Critical care medicine}, 42(3):583--591, 2014.

\bibitem{mnih2015}
Volodymyr Mnih, Koray Kavukcuoglu, David Silver, Andrei~A. Rusu, Joel Veness, Marc~G. Bellemare, Alex Graves, Martin Riedmiller, Andreas~K. Fidjeland, Georg Ostrovski, et~al.
\newblock Human-level control through deep reinforcement learning.
\newblock {\em Nature}, 518(7540):529--533, 2015.

\bibitem{mohan2013}
Karthika Mohan, Judea Pearl, and Jin Tian.
\newblock Missing data as a causal inference problem.
\newblock In {\em Proceedings of the Neural Information Processing Systems Conference (NIPS)}, 2013.

\bibitem{puterman1994}
Martin~L. Puterman.
\newblock {\em Markov Decision Processes: Discrete Stochastic Dynamic Programming}.
\newblock John Wiley \& Sons, 1994.

\bibitem{rhee2019prevalence}
Chanu Rhee, Travis~M Jones, Yasir Hamad, Anupam Pande, Jack Varon, Cara O’Brien, Deverick~J Anderson, David~K Warren, Raymund~B Dantes, Lauren Epstein, et~al.
\newblock Prevalence, underlying causes, and preventability of sepsis-associated mortality in us acute care hospitals.
\newblock {\em JAMA network open}, 2(2):e187571--e187571, 2019.

\bibitem{robins2000}
James~M. Robins, Miguel~A. Hern\'{a}n, and Babette Brumback.
\newblock Marginal structural models and causal inference in epidemiology.
\newblock {\em Epidemiology}, 11(5):550--560, 2000.

\bibitem{sutton2018}
Richard~S. Sutton and Andrew~G. Barto.
\newblock {\em Reinforcement Learning: An Introduction}.
\newblock MIT Press, 2nd edition, 2018.

\bibitem{troxel2004index}
Andrea~B Troxel, Guoguang Ma, and Daniel~F Heitjan.
\newblock An index of local sensitivity to nonignorability.
\newblock {\em Statistica Sinica}, pages 1221--1237, 2004.

\bibitem{tsitsiklis1997}
John~N. Tsitsiklis and Benjamin~Van Roy.
\newblock An analysis of temporal-difference learning with function approximation.
\newblock {\em IEEE Transactions on Automatic Control}, 42(5):674--690, 1997.

\bibitem{watkins1989}
Christopher John Cornish~Hellaby Watkins.
\newblock {\em Learning from delayed rewards}.
\newblock PhD thesis, King's College, Cambridge, 1989.

\bibitem{zhang2020}
Junzhe Zhang and Elias Bareinboim.
\newblock Causal imitation learning with unobserved confounders.
\newblock {\em Advances in Neural Information Processing Systems}, 33, 2020.

\end{thebibliography}


\begin{thebibliography}{21}

\bibitem{bellman1957}
Richard Bellman.
\newblock \emph{Dynamic Programming}.
\newblock Princeton University Press, 1957.

\bibitem{bellman1984}
Richard Bellman.
\newblock \emph{Eye of the Hurricane}.
\newblock World Scientific, 1984.

\bibitem{sutton1988}
Richard S. Sutton.
\newblock Learning to predict by the methods of temporal differences.
\newblock Machine Learning, 3(1): 9--44, 1988.

\bibitem{sutton2018}
Richard S. Sutton and Andrew G. Barto.
\newblock \emph{Reinforcement Learning: An Introduction}.
\newblock MIT Press, 2nd edition, 2018.

\bibitem{harmon1996residual}
Mance E. Harmon and Leemon C. Baird.
\newblock Multi-player residual advantage learning with general function approximation.
\newblock Machine Learning, 23: 165--187, 1996.

\bibitem{watkins1989}
Christopher John Cornish Hellaby Watkins.
\newblock \emph{Learning from delayed rewards}.
\newblock PhD thesis, King's College, Cambridge, 1989.

\bibitem{troxel2004index}
Troxel, Andrea B, Ma, Guoguang, and Heitjan, Daniel F.
\newblock An index of local sensitivity to nonignorability.
\newblock Statistica Sinica: 1221--1237, 2004.

\bibitem{jaakkola1994}
Tommi Jaakkola, Michael I. Jordan, and Satinder P. Singh.
\newblock On the convergence of stochastic iterative dynamic programming algorithms.
\newblock Neural Computation, 6(6): 1185--1201, 1994.

\bibitem{tsitsiklis1997}
John N. Tsitsiklis and Benjamin Van Roy.
\newblock An analysis of temporal-difference learning with function approximation.
\newblock IEEE Transactions on Automatic Control, 42(5): 674--690, 1997.

\bibitem{robbins1951}
Herbert Robbins and Sutton Monro.
\newblock A stochastic approximation method.
\newblock The Annals of Mathematical Statistics, 22(3): 400--407, 1951.

\bibitem{puterman1994}
Martin L. Puterman.
\newblock \emph{Markov Decision Processes: Discrete Stochastic Dynamic Programming}.
\newblock John Wiley \& Sons, 1994.

\bibitem{diggle1994}
Peter Diggle and Michael G. Kenward.
\newblock Informative drop-out in longitudinal data analysis.
\newblock Journal of the Royal Statistical Society Series C: Applied Statistics, 43(1): 49--73, 1994.

\bibitem{hernan2020}
\newblock \emph{Causal Inference: What If}.
\newblock Chapman \& Hall/CRC, 2020.

\bibitem{robins2000}
\newblock Marginal structural models and causal inference in epidemiology.
\newblock Epidemiology, 11(5): 550--560, 2000.

\bibitem{pearl2009}
Judea Pearl.
\newblock \emph{Causality: Models, Reasoning, and Inference}.
\newblock Cambridge University Press, 2009.

\bibitem{mohan2013}
Karthika Mohan, Judea Pearl, and Jin Tian.
\newblock Missing data as a causal inference problem.
\newblock In Proceedings of the Neural Information Processing Systems Conference (NIPS), 2013.

\bibitem{imbens2015}
Guido W. Imbens and Donald B. Rubin.
\newblock \emph{Causal Inference for Statistics, Social, and Biomedical Sciences}.
\newblock Cambridge University Press, 2015.

\bibitem{mnih2015}
Volodymyr Mnih, Koray Kavukcuoglu, David Silver, Andrei A. Rusu, Joel Veness, Marc G. Bellemare, Alex Graves, Martin Riedmiller, Andreas K. Fidjeland, Georg Ostrovski, and others.
\newblock Human-level control through deep reinforcement learning.
\newblock Nature, 518(7540): 529--533, 2015.

\bibitem{zhang2020}
Junzhe Zhang and Elias Bareinboim.
\newblock Causal imitation learning with unobserved confounders.
\newblock Advances in Neural Information Processing Systems, 33, 2020.

\bibitem{bertsekas2019}
Dimitri P. Bertsekas.
\newblock \emph{Reinforcement Learning and Optimal Control}.
\newblock Athena Scientific, 2019.

\bibitem{yu2021}
Chao Yu, Jiming Liu, Shamim Nemati, and Guosheng Yin.
\newblock Reinforcement learning in healthcare: A survey.
\newblock ACM Computing Surveys, 55(1): 1--36, 2021.


\bibitem{bertsekas1978stochastic}
Bertsekas, D. P., \& Shreve, S. E. (1978). \textit{Stochastic optimal control: The discrete-time case}. Academic Press.


\end{thebibliography}

\end{document}